\DeclareMathOperator*{\argmin}{arg\,min}
\newtheorem{theorem}{Theorem}
\newtheorem{proof}{Proof}
\title{An Analysis of Regularized Approaches for Constrained Machine Learning}
\author{
     Michele Lombardi \\
     DISI, ALMA-AI\\
     University of Bologna\\
     \texttt{michele.lombardi2@unibo.it} \\
     \And
     Federico Baldo \\
     DISI\\
     University of Bologna\\
     \texttt{federico.baldo2@unibo.it} \\
     \And
     Andrea Borghesi \\
     DISI, ALMA-AI\\
     University of Bologna\\
     \texttt{andrea.borghesi3@unibo.it} \\
     \And
     Michela Milano \\
     DISI, ALMA-AI\\
     University of Bologna\\
     \texttt{michela.milano@unibo.it} \\
}
\date{}
\begin{document}

\maketitle

\section{Context}

Regularization-based approaches for injecting constraints in Machine Learning
(ML) were introduced (see e.g. \cite{DBLP:journals/ai/DiligentiGS17}) to improve
a predictive model via expert knowledge. Given the recent interest in ethical
and trustworthy AI, however, several works are resorting to these approaches for
enforcing desired properties over a ML model (e.g. fairness
\cite{DBLP:conf/aaai/AghaeiAV19,zemel2013learning,calders2010three}).
Regularized approaches for constraint injection solve, in an exact or
approximate fashion, a problem in the form:
\begin{align}
    &\argmin_{w \in W} \{ L(y) + \lambda^\top C(y) \} \quad \text{with: } y = f(\mathbf{x}; w) \label{eq:reg}
\end{align}
where $L$ is a loss function and $f$ is the model to be trained, with parameter
vector $w$ from a parameter space $W$. We use $f({\bf x}; w)$ to refer to the
model output for the whole training set $\bf x$.  The regularization function
$C$ denotes a vector of (non-negative) constraint violation indices for $m$
constraints, while $\lambda \geq 0$ is a vector of weights (or
\emph{multipliers}).

As an example, in a regression problem we may desire a specific output ordering
for two input vectors in the training set. A viable regularizer may be:
\begin{equation}
    C(y) \equiv \max(0, y_i - y_j) \label{eq:regex}
\end{equation}
the term is zero iff the constraint $y_i \leq y_j$ is satisfied.  For obtaining
balanced predictions in a binary classification problem, we may use instead:
\begin{equation}
    C(y) \equiv \left|\sum_{i = 1}^n y_{i} -  \frac{n}{2}\right|
    \label{eq:classexample}
\end{equation}
where $y_i$ is the binary output associated to one of the two classes. If $n$ is
even, the term is 0 for perfectly balanced classifications.

When regularized methods are used to enforce constraints, \emph{a typical
approach consists in adjusting the $\lambda$ vector until a suitable compromise
between accuracy and constraint satisfaction is reached} (e.g. a discrimination
index becomes sufficiently low). This approach enables the use of traditional
training algorithms, at the cost of having to search over the space of possible
multipliers.

Though the method is known to work well in many practical cases, the process has
been subject to little general analysis. With this note, we aim to make a
preliminary step in this direction, providing a more systematic overview of the
strengths and (in particular) potential weaknesses of this class of approaches.

\section{Analysis}%
\label{sec:Analysis}

Regularized approaches for constraint injection are strongly related to duality
in optimization, from which many of the results we report can be derived.
Despite this, we present an analysis based on first principles and tailored to
our use case, as it provides additional insights. It will be convenient to
reformulate \Cref{eq:reg} by embedding the ML model structure in the $L$ and $C$
functions:
\begin{align}
    {\rm\bf PR(\theta):}\ &\ \argmin_{w \in W} \{ L(w) + \lambda^\top C(w) \} \label{eq:pr}
\end{align}
With some abuse of notation $L(w)$ refers to $L(f({\bf x}; w))$, and the same
for $C(w)$. This approach enables a uniform treatment of convex and non-convex
models or functions. We are interested in the relation between the unconstrained
PR formulation and the following constrained training problem:
\begin{align}
    {\rm\bf PC(\lambda):}\ &\ \argmin_{w \in W} \{ L(w) \mid C(w) \leq \theta \} \label{eq:pc}
\end{align}
where $\theta$ is a vector of thresholds for the constraint violation indices.
In ethical or trustworthy AI applications, PC will be the most natural problem
formulation.

We wish to understand \emph{the viability of solving PC indirectly, by adjusting
the $\lambda$ vector and solving the unconstrained problem PR}, as depicted in
\Cref{alg:pr4pc}; line 2 refers to some kind of search over
the multiplier space.
Ideally, the algorithm should be equivalent to solving the PC formulation
directly. For this to be true, solving PR($\lambda$) should have a chance to
yield assignments that are optimal for the constrained problem. Moreover, an
optimum of PC($\theta$) should always be attainable in this fashion. Additional
properties may enable more efficient search. In the note, we will characterize
\Cref{alg:pr4pc} to the best of our abilities.

\begin{algorithm}[tb]
\caption{\sc pr4pc($\theta$)}
\begin{algorithmic}[1]
    \FOR{$\lambda \in (\mathbb{R}^+)^m$}
    \STATE Optimize PR to find $w^*$
    \IF{$C(w^*) \leq \theta$}
    \STATE Store $w^*, L(w^*)$
    \ENDIF
    \ENDFOR
    \STATE Pick the stored solution with the smallest $L(w^*)$
\end{algorithmic}
\label{alg:pr4pc}
\end{algorithm}

\paragraph{Regularized and Constrained Optima}%

The relation between the PR and PC formulations are tied to the properties of
their optimal solutions. An optimal PC solution $w^*_c$ satisfies:
\begin{align}
    {\bf opt_{c}(w^*, \theta):}\ &\ L(w) \geq L(w^*) \quad \forall w \in W \mid C(w) \leq \theta  
\end{align}
while for an optimal solution $w^*_r$ of PR with multipliers $\lambda$ we have:
\begin{align}
    {\bf opt_{r}(w^*, \lambda):}\ &\ L(w) + \lambda^\top C(w) \geq L(w^*) + \lambda^\top C(w^*) \quad \forall w \in W
\end{align}
The definitions apply also to local optima, by swapping $W$ with some
neighborhood of $w^*_c$ and $w^*_r$. We can now provide the following result:

\begin{theorem}\label{thr:pr2pc}
an optimal solution $w^*$ for PR is also optimal for PC, for a threshold equal
to $C(w^*)$:
\begin{equation}
    opt_r(w^*, \lambda) \Rightarrow opt_c(w^*, C(w^*)) \label{eq:fwd}
\end{equation}
\end{theorem}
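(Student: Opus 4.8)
The plan is to argue directly from the two optimality definitions, exploiting the sign restriction $\lambda \geq 0$. Assume $opt_r(w^*, \lambda)$ holds, i.e. $L(w) + \lambda^\top C(w) \geq L(w^*) + \lambda^\top C(w^*)$ for every $w \in W$. We must show that $w^*$ is optimal for PC with the particular threshold $\theta = C(w^*)$, that is, $L(w) \geq L(w^*)$ for every $w \in W$ that is PC-feasible, meaning $C(w) \leq C(w^*)$ componentwise.

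First I would fix an arbitrary such feasible $w$. Since $C(w) \leq C(w^*)$ and $\lambda \geq 0$, taking the inner product preserves the inequality, so $\lambda^\top C(w) \leq \lambda^\top C(w^*)$, equivalently $\lambda^\top C(w^*) - \lambda^\top C(w) \geq 0$. Next I would rearrange the PR-optimality inequality evaluated at this $w$ to isolate $L(w)$:
\begin{equation}
    L(w) \geq L(w^*) + \lambda^\top C(w^*) - \lambda^\top C(w) \geq L(w^*),
\end{equation}
where the second step uses the nonnegativity just established. Since $w$ was an arbitrary PC-feasible point, this is exactly $opt_c(w^*, C(w^*))$, completing the argument. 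Note that $w^*$ itself is feasible for this threshold (with $C(w^*) \leq C(w^*)$), so the threshold is well posed and the optimum is attained.

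Honestly, I expect there to be no real obstacle: the result is a one-line consequence of monotonicity of $v \mapsto \lambda^\top v$ for $\lambda \geq 0$ together with the PR inequality. The only point that warrants a little care is the local-optimum version mentioned in the text — there one restricts $W$ to a neighborhood $N$ of $w^*$, and the same chain of inequalities goes through verbatim on $N$, yielding that $w^*$ is a local PC-optimum for threshold $C(w^*)$; since the argument is purely pointwise, intersecting neighborhoods causes no trouble. It is also worth flagging explicitly (as a remark rather than part of the proof) that the converse fails in general and that $\theta = C(w^*)$ is forced — PR-optimality says nothing about feasibility for thresholds strictly below $C(w^*)$ — which motivates the complementary analysis of \Cref{alg:pr4pc} that presumably follows.
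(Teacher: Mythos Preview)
Your argument is correct and essentially the same as the paper's: both isolate $L(w)$ from the PR-optimality inequality and use $\lambda \geq 0$ together with $C(w) \leq C(w^*)$ to conclude $L(w) \geq L(w^*)$; the only cosmetic difference is that the paper phrases this as a contradiction while you argue directly. Your remark about the local-optimum version also matches the paper's one-line observation.
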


\begin{proof}[by contradiction]
Let us assume that $w^*$ is an optimal solution for PR but not optimal for PC,
i.e. that there is a feasible $w^\prime \in W$ such that:
\begin{equation}
    L(w^\prime) < L(w^*) \label{eq:prem}
\end{equation}
Since $w^*$ is optimal for PR, we have that:
\begin{equation}
    L(w^\prime) \geq L(w^*) + \lambda^\top (C(w^*) - C(w^\prime)) \label{eq:cons}
\end{equation}
Since $w^\prime$ is feasible for $\theta = C(w^*)$, we have that its violation
vector cannot be greater than that of $w^*$. Formally, we have that $C(w^\prime)
\leq C(w^*)$, or equivalently $C(w^*) - C(w^\prime) \geq 0$. Therefore
\Cref{eq:cons} contradicts \Cref{eq:prem}, thus proving the original point. The
same reasoning applies to local optima. \qed 
\end{proof}

Theorem~\ref{thr:pr2pc} shows that solving PR($\lambda$) \emph{always results in
an optimum for the constrained formulation}, albeit for threshold $\theta =
C(w^*)$ that cannot be a priori chosen. The statement is true even for
non-convex loss, reguralizer, and model structure. This is a simple, but
powerful result, which provides a strong motivation for
Algorithm~\ref{alg:pr4pc}. 

\paragraph{Global vs Local Optimality}

If regularized problems can be solved to global optimality, then increasing a
weight in the $\lambda$ vector cannot have an adverse effect on the satisfaction
level of the corresponding constraint. Formally, there is a monotonic relation
between $\lambda$ and $C(w^*)$:
\begin{equation}
    opt_r(w^\prime, \lambda^\prime), opt_r(w^{\prime\prime}, \lambda^{\prime\prime}), \lambda_j^\prime \geq \lambda_j^{\prime\prime} \Rightarrow C_j(w^\prime) \leq C_j(w^{\prime\prime})
\end{equation}
The proof is omitted due to lack of space. When monotonicity holds, searching
over the multiplier space in \Cref{alg:pr4pc} can be considerably simpler (e.g.
binary search for a single multiplier, or sub-gradient descent in general
\cite{fioretto2020lagrangian}).

However, global optimality is attainable only in very specific cases (e.g.
convex loss, regularizer, and model) or by solving PR in an exact fashion (which
may be computationally expensive). Failing this, monotonicity will not strictly
hold, in the worst case requiring exhaustive (or semi-exhaustive) search on the
multiplier space. Additionally, relying on local optima will lead to suboptimal
solutions (subject to uncertainty if stochastic training algorithm is employed).

\paragraph{Unique vs Multiple Optima}

Further issues arise (even for global optimality) when the regularized problem
PR($\lambda$) has multiple equivalent optima. In the fully convex case, this may
happen if the multiplier values cause the presence of plateaus (see
\Cref{fig:plateau}A, where $\lambda = 1$). In the (more practically relevant)
non-convex case, there may be separate optima with the same value for the
regularized loss, but different trade-offs between loss and constraint
violation: this is depicted for a simple example in \Cref{fig:plateau}B.

Multiple equivalent optima may cause a non-monotonic relation between $\lambda$
and the constraint satisfaction level, similarly to what discussed in the
previous paragraph.

Additionally, it may happen that different constrained optima are associated to
the same multiplier, \emph{and to no other multiplier}.  In \Cref{fig:plateau}A,
for example, the multiplier $\lambda = 1$ is associated to all optimal solutions
of PC($\theta$) with $\theta \leq \theta^*$; no other multiplier is associated
to the same solutions. Unless some kind of tie breaking technique is employed,
this situation makes specific constrained optima impossible to reach.

\begin{figure}[tb]
\begin{center}
    \includegraphics[width=.9\textwidth]{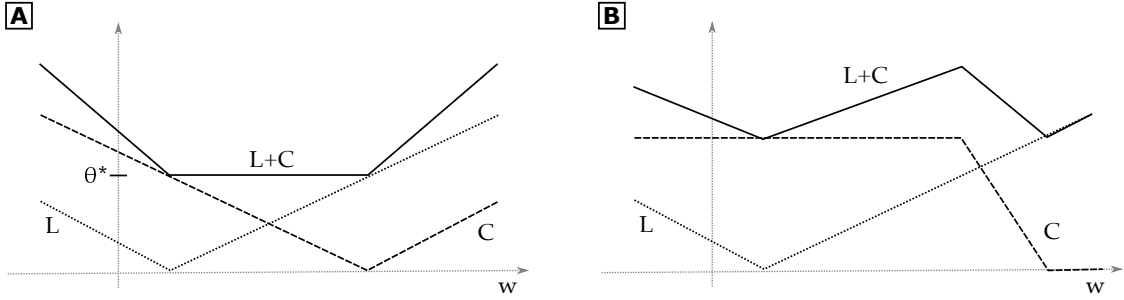}
\end{center}
\caption{Multiple Optima in Convex (A) and Non-Convex (B) Regularized Problems}
\label{fig:plateau}
\end{figure}

\paragraph{Inaccessible Constrained Optima}

We next proceed to investigate whether an optimum of the constrained formulation
may be associated to no multiplier value: any such point would be completely
unattainable via \Cref{alg:pr4pc}. We have that:
\begin{theorem}\label{thr:pc2pr} 
    An optimal solution $w^*$ for PC is optimal for
    PR iff there exists a multiplier vector $\lambda$ that satisfies:
\begin{equation}
    \max_{\mathclap{\substack{w \in W, \\ C_j(w) > C_j(w^*)}}}\ R(w, \lambda) \leq \lambda_j \leq \min_{\mathclap{\substack{w \in W, \\ C_j(w) < C_j(w^*)}}}\ R(w, \lambda)
\end{equation}
with:
\begin{equation}
    R(w, \lambda) = - \frac{\Delta L(w, w^*) + \lambda_{\overline{j}}^\top \Delta C_{\overline{j}} (w, w^*) 
    }{\Delta C_j(w, w^*)} \label{eq:boundexp}
\end{equation}
\end{theorem}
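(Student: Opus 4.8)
The plan is to rewrite the optimality condition $opt_r(w^*, \lambda)$ as an explicit family of linear constraints on the scalar $\lambda_j$ (with the other multipliers held fixed), reduce that family to the stated pair of bounds, and then check that the bounds are not only necessary but also sufficient. Write $\Delta L(w, w^*) = L(w) - L(w^*)$ and $\Delta C(w, w^*) = C(w) - C(w^*)$. By definition $opt_r(w^*, \lambda)$ is the statement $\Delta L(w, w^*) + \lambda^\top \Delta C(w, w^*) \geq 0$ for every $w \in W$. Splitting off the $j$-th coordinate gives $\lambda^\top \Delta C(w, w^*) = \lambda_j\, \Delta C_j(w, w^*) + \lambda_{\overline{j}}^\top \Delta C_{\overline{j}}(w, w^*)$, so each $w$ contributes the single inequality $\lambda_j\, \Delta C_j(w, w^*) \geq -\bigl(\Delta L(w, w^*) + \lambda_{\overline{j}}^\top \Delta C_{\overline{j}}(w, w^*)\bigr)$, which is linear in $\lambda_j$.

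The next step is a case split on the sign of $\Delta C_j(w, w^*)$. If $C_j(w) > C_j(w^*)$, dividing by the positive quantity $\Delta C_j(w, w^*)$ yields the lower bound $\lambda_j \geq R(w, \lambda)$; if $C_j(w) < C_j(w^*)$, the divisor is negative, the inequality reverses, and we get the upper bound $\lambda_j \leq R(w, \lambda)$; if $C_j(w) = C_j(w^*)$, the $\lambda_j$-term vanishes and the inequality reduces to $\Delta L(w, w^*) + \lambda_{\overline{j}}^\top \Delta C_{\overline{j}}(w, w^*) \geq 0$, a condition on $\lambda_{\overline{j}}$ alone. Conjoining over all $w \in W$, the lower bounds collapse into $\lambda_j \geq \max_{w : C_j(w) > C_j(w^*)} R(w, \lambda)$ and the upper bounds into $\lambda_j \leq \min_{w : C_j(w) < C_j(w^*)} R(w, \lambda)$ (with $\max \emptyset = -\infty$, $\min \emptyset = +\infty$, and $\max$/$\min$ read as $\sup$/$\inf$ when unattained). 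Doing this for every coordinate $j$ reproduces the displayed sandwich, which proves the forward direction: if $w^*$ is optimal for PR under some $\lambda$, then $opt_r(w^*, \lambda)$ holds and the computation above shows this same $\lambda$ obeys the bounds.

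For the converse, I would take a $\lambda$ satisfying the bounds and verify that it certifies $opt_r(w^*, \lambda)$, i.e.\ that $\Delta L(w, w^*) + \lambda^\top \Delta C(w, w^*) \geq 0$ for an arbitrary $w \in W$. If $C(w) \neq C(w^*)$, pick any coordinate $k$ with $\Delta C_k(w, w^*) \neq 0$: then $w$ lies in the relevant index set of the $k$-th sandwich, so $\lambda_k \geq R(w, \lambda)$ or $\lambda_k \leq R(w, \lambda)$ according to the sign of $\Delta C_k(w, w^*)$, and multiplying back through by $\Delta C_k(w, w^*)$ (reversing the inequality in the negative case) recovers exactly $\Delta L(w, w^*) + \lambda^\top \Delta C(w, w^*) \geq 0$. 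The only remaining case is $C(w) = C(w^*)$, where $\Delta C(w, w^*) = 0$ and the inequality reduces to $\Delta L(w, w^*) \geq 0$; here I would invoke optimality of $w^*$ for PC, since $C(w) = C(w^*) \leq \theta$ makes $w$ feasible and hence $L(w) \geq L(w^*)$.

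The step I expect to be the main obstacle is handling the degenerate directions cleanly. The points $w$ with $\Delta C_j(w, w^*) = 0$ are invisible to the $j$-th pair of bounds, so in the $\Leftarrow$ direction one has to argue that they are nonetheless covered — by another coordinate with a nonzero increment, or, in the fully degenerate case $C(w) = C(w^*)$, by the constrained optimality of $w^*$; overlooking this gap is the natural mistake. A secondary point to keep straight is that the right-hand sides $R(w, \lambda)$ still depend on $\lambda_{\overline{j}}$, so "there exists $\lambda$" must be read as the existence of one vector simultaneously satisfying all $m$ coordinate conditions, not as $m$ separable one-dimensional conditions; and one should carry along the sign restriction $\lambda \geq 0$ inherited from the definition of PR.
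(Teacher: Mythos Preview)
Your proof is correct and follows essentially the same approach as the paper: rewrite $opt_r(w^*,\lambda)$ as $\Delta L(w,w^*) + \lambda^\top \Delta C(w,w^*) \geq 0$, isolate the $j$-th coordinate, and case-split on the sign of $\Delta C_j(w,w^*)$ to obtain the lower and upper bounds on $\lambda_j$. Your treatment is in fact more careful than the paper's, since you explicitly argue the converse direction and handle the degenerate cases $\Delta C_j(w,w^*)=0$ (via another nonzero coordinate or via PC optimality when $C(w)=C(w^*)$), whereas the paper essentially stops at deriving the two families of bounds.
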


In the theorem, we refer with $\Delta C(w, w^*)$ to the difference $C(w) -
C(w^*)$ and with $\Delta L(w, w^*)$ to the difference $L(w) - L(w^*)$. Moreover,
$\overline{j}$ refers to the set of all multiplier indices, except for $j$.
Intuitively, every assignment for which constraint $j$ has a lower degree of
violation than in $w^*$ enforces an upper bound on $\lambda_j$; every assignment
for which the violation is higher enforces a lower bound.

\begin{proof}
Let $w^*$ be a PC optimum for some threshold $\theta$; this implies that $w^*$
is also optimal for a tightened threshold, i.e. for $\theta = C(w^*)$. We
therefore have:
\begin{equation}
    L(w) \geq L(w^*) \quad \forall w \in W, C(w) \leq C(w^*) \label{eq:pcoptt_prem}
\end{equation}
We are interested in the conditions for $w^*$ to be optimal for the regularized
formulation, for some multiplier vector $\lambda$. This is true iff:
\begin{equation}
    L(w) + \lambda^\top C(w) \geq L(w^*) + \lambda^\top C(w^*) \quad \forall w \in W 
\end{equation}
which can rewritten as:
\begin{equation}
    \lambda^\top \Delta C(w, w^*) + \Delta L(w, w^*) \geq 0 \quad \forall w \in W  \label{eq:propt_cond}
\end{equation}
If $\Delta C(w, w^*) = 0$, then \Cref{eq:propt_cond} is trivially satisfied for
every multiplier vector, due to \Cref{eq:pcoptt_prem}. Otherwise, at least some
component in $\Delta C(w, w^*)$ will be non-null, so that we can write:
\begin{equation}
    \lambda_j \Delta C_j(w, w^*) + \lambda_{\overline{j}}^\top \Delta C_{\overline{j}} (w, w^*) + \Delta L(w, w^*) \geq 0
\end{equation}
If $\Delta C_j(w,w^*) < 0$, we get:
\begin{equation}
    \lambda_j \leq - \frac{\Delta L(w, w^*) + \lambda_{\overline{j}}^\top \Delta C_{\overline{j}} (w, w^*) 
    }{\Delta C_j(w, w^*)} \quad \forall w \in W \mid C_j(w) < C_j(w^*)
\end{equation}
I.e. a series of upper bounds for $\lambda_j$. If $\Delta C_j(w,w^*) > 0$, we get:
\begin{equation}
    \lambda_j \geq - \frac{\Delta L(w, w^*) + \lambda_{\overline{j}}^\top \Delta C_{\overline{j}} (w, w^*) 
    }{\Delta C_j(w, w^*)} \quad \forall w \in W \mid C_j(w) > C_j(w^*)
\end{equation}
I.e. a series of lower bounds on $\lambda_j$. From these the original result is obtained. \qed
\end{proof}

The main consequence of Theorem~\ref{thr:pc2pr} is that the reported system of
inequalities may actually admit no solution, meaning that \emph{some constrained
optima may be unattainable} via \Cref{alg:pr4pc}. This is the case for the
optimum $w^*$ (for threshold $\theta^*$) in the simple example from
\Cref{fig:forbidden}, since any multiplier value will result in an unbounded
regularized problem. This is a potentially serious limitation of regularized
methods: the actual severity of the issue will depend on the specific properties
of the loss, regularizer, and ML model being considered.

\begin{figure}[tb]
\begin{center}
    \includegraphics[width=.9\textwidth]{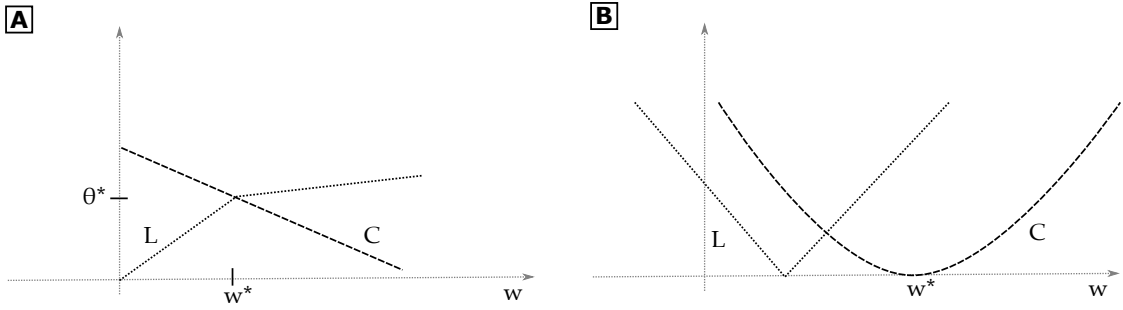}
\end{center}
\caption{(A) Unattainable Constrained Optimum; (B) Numerical Issues for $w^*$}
\label{fig:forbidden}
\end{figure}

\paragraph{Numerical Issues}

Theorem~\ref{thr:pc2pr} highlights another potential issue of regularized
approaches, arising when assignments with constraint violations arbitrarily
close to $C(w^*)$ exist. In such a situation, the denominator in
\Cref{eq:boundexp} becomes vanishingly small: depending on the properties of the
loss function, this may result in arbitrarily high lower bounds or arbitrarily
small upper bounds.  Informally, reaching a specific optimum for the constrained
problem may require \emph{extremely high or extremely low multipliers}, which
may cause numerical issues at training time. A simple example is depicted in
\Cref{fig:forbidden}B, where a regularizer with vanishing gradient and a loss
with non-vanishing gradient are combined. In such a situation, the constrained
optimum $w^*$ is reached via \Cref{alg:pr4pc} only for $\lambda \rightarrow
\infty$.

\paragraph{Differentiability}
Besides the ones reported here, one should be wary of pitfalls that are not
immediately related to \Cref{alg:pr4pc}. Many regularization based approaches
for constraint injection, for example, require differentiability of the $C$
function, which is often obtained by making approximations. For instance, in
\Cref{eq:classexample} differentiability does not hold due to the use of binary
variables; relaxing the integrally constraint address the issue, but allows to
satisfy the constraints by assigning 0.5 to all outputs, i.e. by having
completely uncertain, rather than balanced, predictions.

\section{Conclusions}%
\label{sec:Conclusions}

Combining the ML and optimization paradigms is a very interesting research
avenue still under ongoing exploration by the AI community.  Integrating
learning and optimization will lead to approaches better suited for ethical and
trustworthy AI (e.g. by making sub-symbolic models fair and explainable).  A
possible method to merge these paradigm consists in adding a regularization term
to the loss of a learner, to constrain its behaviour.  In this note, we offered
a preliminary discussion on a particular aspect of this problem, namely we
tackle the issue of finding the right balance between the loss (the accuracy of
the learner) and the regularization term (the degree of constraint
satisfaction); typically, this search is performed by adjusting a set of
multipliers until the desired compromise is reached.
The key results of this paper is the formal demonstration that this type of
approach, albeit well suited for many practical circumstances, \emph{cannot
guarantee to find all optimal solutions}.  In particular, in the non-convex case
there might be optima for the constrained problem that do not correspond to any
multiplier value. This result clearly hinders the applicability of
regularizer-based methods, at least unless more research effort is devoted to
discover new formulations or algorithms.

\bibliographystyle{abbrv}
\bibliography{main}

\end{document}